\newtheorem{theorem}{Theorem}
\newtheorem{lemma}[theorem]{Lemma}
\newtheorem{definition}[theorem]{Definition}
\title{Gaussian Processes with Differential Privacy}
\author{%
  Antti Honkela and Laila Melkas\\
  Helsinki Institute for Information Technology HIIT\\
  Department of Computer Science\\
  University of Helsinki \\
  \texttt{\{antti.honkela,laila.melkas\}@helsinki.fi} \\
}
\newcommand{\bk}{\mathbf{k}}
\newcommand{\bm}{\mathbf{m}}
\newcommand{\bmu}{\boldsymbol{\mu}}
\newcommand{\bu}{\mathbf{u}}
\newcommand{\bx}{\mathbf{x}}
\newcommand{\by}{\mathbf{y}}
\newcommand{\bz}{\mathbf{z}}
\newcommand{\bd}{\mathbf{d}}
\newcommand{\bA}{\mathbf{A}}
\newcommand{\bB}{\mathbf{B}}
\newcommand{\bD}{\mathbf{D}}
\newcommand{\bE}{\mathbf{E}}
\newcommand{\bI}{\mathbf{I}}
\newcommand{\bK}{\mathbf{K}}
\newcommand{\bS}{\mathbf{S}}
\newcommand{\bV}{\mathbf{V}}
\newcommand{\bX}{\mathbf{X}}
\newcommand{\bY}{\mathbf{Y}}
\newcommand{\bZ}{\mathbf{Z}}
\newcommand{\bSigma}{\boldsymbol{\Sigma}}
\newcommand{\diff}{\;\mathrm{d}}
\begin{document}

\maketitle

\begin{abstract}
Gaussian processes (GPs) are non-parametric Bayesian models that are widely used for diverse prediction tasks. Previous work in adding strong privacy protection to GPs via differential privacy (DP) has been limited to protecting only the privacy of the prediction targets (model outputs) but not inputs. We break this limitation by introducing GPs with DP protection for both model inputs and outputs. We achieve this by using sparse GP methodology and publishing a private variational approximation on known inducing points. The approximation covariance is adjusted to approximately account for the added uncertainty from DP noise. The approximation can be used to compute arbitrary predictions using standard sparse GP techniques. We propose a method for hyperparameter learning using a private selection protocol applied to validation set log-likelihood. Our experiments demonstrate that given sufficient amount of data, the method can produce accurate models under strong privacy protection.
\end{abstract}

\section{Introduction}

Differential privacy (DP; \citealp{dwork_calibrating_2006}) is currently the standard approach for privacy-preserving machine learning. Based on the idea of bounding the impact of a single individual on the outcome, DP has been widely and successfully applied to models that treat each observation independently, ranging from deep learning \citep[e.g.][]{abadi_deep_2016} to Bayesian regression models \citep[e.g.][]{bernstein_differentially_2019} to generic Bayesian inference using Markov chain Monte Carlo \citep[e.g.][]{heikkila_differentially_2019} or variational inference \citep{jalko_differentially_2017}.

Gaussian processes (GPs; \citealp{rasmussen_gaussian_2006}) are a class of highly popular non-parametric Bayesian models, that do not fit this paradigm. GPs are often used to define a non-linear non-parametric regression model by modelling the (potentially multivariate) output value $\by = f(\bx)$ at each input $\bx$ using a distribution over \emph{functions} $f$. These are assumed to follow the GP $f \sim \mathcal{GP}(\mu, k)$ with some mean function $\mu$ (often assumed to be zero) and \emph{covariance} or \emph{kernel} $k$. Observations are usually assumed to include noise. By definition, the distribution of $f$ evaluated at any number of points follows a multivariate Gaussian distribution.

After fitting a GP to the data, the distribution of $f$ at each point depends directly on all observations. The dependence on inputs $\bx$ is especially complicated. This has caused the previous work in DP GPs \citep{smith_differentially_2018,smith_differentially_2019} to use DP mechanism for functional data \citep{hall_differential_2013} to address only the significantly weaker privacy model of label privacy, where only the privacy of the outputs ($\bY$) is considered while the inputs ($\bX$) are assumed non-sensitive. We seek a method to protect both $\bX$ and $\bY$. In order to do that, we formulate the GP learning problem in a way that has less complicated dependence on the input data $\bX$.

Our main novel contributions are:
\begin{itemize}
    \item A method for learning a variational sparse GP approximation under DP for both inputs and outputs.
    \item A method for approximately correcting the approximation posterior to account for DP noise.
    \item A method for learning DP-GP hyperparameters under DP.
\end{itemize}

\section{Background}

\subsection{Differential Privacy (DP)}

DP defines a statistical criterion for a randomised algorithm that bounds the accuracy of inferences an adversary can make about the inputs of the algorithm using its outputs, providing a guarantee of privacy to input data subjects. The guarantee applies regardless of additional side information the adversary may have, and degrades gracefully under repeated use of the data.

DP definition is based on the concept of neighbouring data sets $\bD \sim \bD'$. We use the \emph{substitute} neighbourhood, where $\bD \sim \bD'$ if $\bD'$ can be obtained from $\bD$ by substituting a single entry.

\begin{definition}
Given $\epsilon \ge 0, \delta \in [0,1]$, a randomised algorithm $\mathcal{M}$ is $(\epsilon, \delta)$-DP if for all neighbouring data sets $\bD \sim \bD'$ and for all measurable sets of outputs $O \subseteq \mathrm{Range}(\mathcal{M})$, we have
\begin{equation}
    \mathrm{Pr}[\mathcal{M}(\bD) \in O] \le e^{\epsilon} \mathrm{Pr}[\mathcal{M}(\bD') \in O] + \delta.
\end{equation}
\end{definition}
Lower values of $\epsilon$ and $\delta$ correspond to stronger privacy.

Our main tool for achieving DP is the analytical Gaussian mechanism \citep{balle_improving_2018}, which can release the value of a function $f(\bD)$ under $(\epsilon, \delta)$-DP by adding Gaussian noise whose variance is tuned to the $L_2$-sensitivity of $f$, $\Delta_f = \sup_{\bD \sim \bD'} \|f(\bD) - f(\bD')\|_2$.

\subsection{Sparse and Variational GPs}

Gaussian processes (GPs) provide a means for performing exact Bayesian inference over functions $f \sim \mathcal{GP}(\mu, k(\cdot, \cdot))$ \citep{rasmussen_gaussian_2006}. A GP defines a distribution over functions, which is specified by the mean function $\mu$, often assumed to be zero \emph{a priori}, and covariance or kernel function $k(\cdot, \cdot)$. Covariance function is the key parameter that defines the smoothness and other properties of the functions $f$.

The first GP learning problem is \emph{inference of the posterior GP} given noisy observations of some underlying function. Assuming the observations follow a Gaussian likelihood, it is possible to perform exact inference over $f$, but given $n$ observations this will require operations with $n \times n$ kernel matrix that require significant time and results in a complicated dependence on input data that is unfavourable to DP.
A popular method for reducing the computational complexity is to employ a \emph{sparse GP method} that seeks to replace the dependence on the full input $\bX$ with dependence on a smaller number of inducing inputs $\bZ$ and values of the function at those points $\bu = f(\bz)$ \citep{snelson_sparse_2005,candela_unifying_2005,titsias_variational_2009,bauer_understanding_2016}.

The \emph{variational sparse GP} first introduced by \citet{titsias_variational_2009} solves the problem by defining a variational distribution $q(\bu)$ over the function values at the inducing inputs. Assuming that the inducing inputs and kernel parameters are fixed and a Gaussian observation model with noise variance $\sigma^2$, the mean $\bm$ and covariance $\bS$ of the optimal approximating distribution for function values $\bu$ at $\bZ$, $q(\bu) = \mathcal{N}(\bu;\; \bm, \bS)$, can be written in closed form as \citep[][Eq. 10]{titsias_variational_2009}:
\begin{equation}
 \begin{split}
    \bm &= \sigma^{-2} \bK_{ZZ}\bSigma \bK_{ZX} \bY, \\
    \bS &= \bK_{ZZ} \bSigma \bK_{ZZ}, \\
    \bSigma &= (\bK_{ZZ} + \sigma^{-2} \bK_{ZX}\bK_{XZ})^{-1},
    \label{eq:q_u}
 \end{split}
\end{equation}
where $\bK_{ZZ} = (k(\bz_i, \bz_j))$ is a matrix of kernel values between inducing inputs and $\bK_{ZX} = \bK_{XZ}^T = (k(\bz_i, \bx_j))$ is a matrix of kernel values between inducing inputs and input data locations $\bX$. The variational approximation can be used to evaluate arbitrary predictions \citep[][Eq. 21]{hensman_scalable_2015}.

The above discussion assumes the kernel and all other hyperparameters of the model are known, which is usually not the case. The resulting second GP learning problem, \emph{hyperparameter learning}, is addressed in the variational framework by finding hyperparameters that maximise the variational evidence lower bound (ELBO) \citep{titsias_variational_2009}.

\section{GP Inference under Differential Privacy}

One key insight in making variational GPs DP is that we only need to publish $\bZ$ and $q(\bu)$ to allow using the model to make predictions on arbitrary new points. Assuming that $\bZ$ is chosen independently of the input data or using some other DP mechanism, it will be enough to ensure the privacy of $\bm$ and $\bS$ that define $q(\bu)$.

Following \eqref{eq:q_u}, $q(\bu)$ only depends on the private data $\bX, \bY$ via two terms:
\begin{equation}\label{eq:AB_def}
\begin{split}
    \bA &= \bK_{ZX} \bY = \sum_{i=1}^n \bk_i y_i,\\ 
    \bB &= \bK_{ZX}\bK_{XZ} = \sum_{i=1}^n \bk_i \bk_i^T, 
\end{split}
\end{equation}
where $\bk_i = \bK_{ZX_i} = (k(\bz_j, \bx_i))_{j=1}^{|Z|}$ is a vector of kernel values between $\bx_i$ and all inducing points $\bZ$.

Both of these terms are sums over data points $(\bx_i, y_i)$, suggesting one can implement efficient DP mechanisms for these by evaluating the sensitivity of the terms.

\subsection{Sensitivity Calculations}

Sensitivity calculations will depend on the kernel. Assuming a bounded kernel with $|k| \le \sigma_f^2$, we have a trivial bound
\begin{equation}\label{eq:sensbasic}
    \| \bk \|_2 \le \sqrt{|Z|} \sigma_f^2 =: R_k.
\end{equation}
Assuming scalar $y$ with $|y| \le R_y$, we can adapt the sensitivity analysis of \citet{kulkarni_differentially_2021} for the mechanism
\begin{equation}\label{eq:gaussmech}
    \mathcal{M}(\bX, \bY) = \begin{bmatrix} \bA \\ \hat{\bB} \end{bmatrix}
    + \mathcal{N}\left(0, \mathrm{diag}(\sigma_a^2 \bI_d, \sigma_b^2 \bI_{d_2})\right)
\end{equation}
to obtain the sensitivity
\begin{equation}\label{eq:dpsens}
    \Delta = \sqrt{\frac{\sigma_b^2}{2 \sigma_a^2}R_y^4 + 2 R_y^2 R_k^2 + 2\frac{\sigma_a^2}{\sigma_b^2} R_k^4}.
\end{equation}
$\hat{\bB}$ denotes the vector $[b_{11} \cdots b_{dd} \quad \sqrt{2}b_{12} \cdots \sqrt{2} b_{d-1,d}]$ formed from upper diagonal elements of $\bB$ with off-diagonal elements scaled up, which can be used to reconstruct a full noisy symmetric $\bB$, $d = |Z|$ and $d_2 = \binom{d+2}{d}$. Full derivation of Eq. \eqref{eq:dpsens} is in the Supplementary material.

\subsection{Kernel-Dependent Sensitivity}

The bound $\| \bk_i \|_2 \le \sqrt{|Z|} \sigma_f^2$ is pessimistic. It is possible to obtain tighter bounds by making assumptions about the kernel and inducing input placement.

Let us assume that we have a stationary kernel that is decreasing as a function of the distance of the two inputs. This class includes most standard stationary kernels. We can write such kernels as $k(\bx, \bx') = \sigma_f^2 k_r(r)$, where $r = \| \bx - \bx'\|$ and $0 \le k_r(r) \le 1$.

Let us further assume that the minimum distance between two input points is at least $d_z$. In 1D we have for example
\begin{equation}\label{eq:sens1d}
    \frac{\| \bk_i \|_2^2}{\sigma_f^4}
    = \sum_{j=1}^{|Z|} k_r(\| \bz_j - \bx_i \|)^2
    \le \sum_{j'=0}^{z_{hf}} 2 k_r(j' d_z)^2 + (z_{hc} - z_{hf}) k_r(z_{hc} d_z)^2,
\end{equation}
where $z_{hf} = \lfloor|Z|/2\rfloor$ and $z_{hc} = \lceil|Z|/2\rceil$.

In higher dimensions an accurate analysis becomes considerably more difficult. We can still obtain a simple bound on the kernel values by noting that we can have at most one inducing point at a distance smaller than $d_z/2$. This yields the bound
\begin{equation}\label{eq:sensnd}
    \frac{\| \bk_i \|_2^2}{\sigma_f^4}
    = \sum_{j=1}^{|Z|} k_r(\|\bz_j - \bx_i\|)^2
    \le 1 + (|Z|-1) k_r(d_z/2)^2.
\end{equation}

A tight upper bound can be computed with specific information on the kernel and the locations of the inducing points. If we assume, for example, that the inducing points form a regular grid and that the kernel can be written as a product of $N$ 1-dimensional exponentiated quadratic kernels and variance $\sigma_f^2$. Squared kernel norm can then be expressed as
\begin{equation}\label{eq:sensqe}
\begin{split}
 \| \bk_i \|^2 &= \sigma_f^4 \sum_j \exp \left( - \sum_{n}\frac{\| x_{in} - z_{jn} \|^2}{2\ell_n^2}\right)^2 \\
 &= \sigma_f^4 \sum_j \exp \left( - \sum_n \left\| \frac{x_{in}}{\ell_n} - \frac{z_{jn}}{\ell_n} \right\|^2 \right) \\
 &= \sigma_f^4 \sum_j \exp \left( - \| \tilde{\bx} - \tilde{\bz}_j \|^2 \right).
\end{split}
\end{equation}

Scaling the inducing points by scalars does not affect their relative locations. Taking the gradient and the Hessian of this form, we get
\begin{align}
 &\nabla_{\tilde{\bx}} \| \bk \|^2 = - 2\sigma_f^4 \sum_j \exp \left( - \| \bd_j \|^2 \right) (\bd_j) \label{eq:sensgrad}\\
 &\frac{\partial^2 \| \bk \|^2}{\partial \tilde{\bx} \partial \tilde{\bx}^T} = 2\sigma_f^4 \sum_j \exp \left( - \| \bd_j \|^2 \right) \left( 2 \bd_j \bd_j^T - \bI \right), \label{eq:senshes}
\end{align}
where $\bd_j = \tilde{\bx} - \tilde{\bz}_j$.

Due to symmetry of the regular grid formed by the inducing points, it can be seen that the gradient is zero when $\bx_i$ is at the centre of the grid. If the number of grid points in each direction is odd, the centre point is a maximum. This can be checked by verifying that the Hessian \eqref{eq:senshes} is negative definite, which can be done with the available information on the kernel and the inducing points. In general, additional analyses are still required to show formally that the centre point yields the global maximum.

\subsection{Noise-Aware DP Posterior Approximation}

Applying the Gaussian mechanism to privatise \eqref{eq:AB_def}, we will replace $\bA$ and $\bB$ with $\bA + \bE_a$ and $\bB + \bE_b$, respectively, where $\bE_a$ and $\bE_b$ are the noise terms. A naive application of these to the update rules in Eq. \eqref{eq:q_u} would replace $\Sigma$ with $(\bK_{ZZ} + \sigma^{-2}(\bK_{ZX}\bK_{XZ} + \bE_b))^{-1}$, which can become non-positive definite if the noise $\bE_b$ is large. This would make the results basically meaningless, as $\bS$ would not be a valid covariance matrix anymore.

In order to avoid this issue, the update needs to be moderated by adding a diagonal regularisation term to obtain the DP update for $q(\bu) = \mathcal{N}(\bu;\; \bm, \bS)$:
\begin{equation}
 \begin{split}
    \bm &= \sigma^{-2} \bK_{ZZ}\tilde{\bSigma} (\bA + \bE_a),\\
    \bS &= \bK_{ZZ} \tilde{\bSigma} \bK_{ZZ} + \bS_2,\\
    \tilde{\bSigma} &= (\bK_{ZZ} + \sigma^{-2} (\bB + \bE_b) + \lambda \bI)^{-1},
    \label{eq:q_u_DP}
 \end{split}
\end{equation}
where $\lambda$ is a constant selected to make sure $\bK_{ZZ} + \sigma^{-2} (\bB + \bE_b) + \lambda \bI$ remains positive definite and $\bS_2$ is added to the covariance to account for the impact of DP noise to $\bm$.

In order to set $\lambda$, we use Lemma 6 of \citet{wang_revisiting_2018}, which shows that with probability $1-\rho$, $\|\bE_b\|^2 \le \sigma_b^2 d \log(2 d^2/\rho)$, where the dimensionality $d=|Z|$ is equal to the number of inducing points $\bZ$ and $\sigma_b^2$ is the variance of noise added to $\bB$.
Using this and taking into account the smaller variance of off-diagonal terms, we can set
\begin{equation}\label{eq:lambda}
    \lambda = \sigma_b \sigma^{-2} \sqrt{|Z| \log(2|Z|^2/\rho)} \frac{|Z|+1}{2|Z|}
\end{equation}
with suitable $\rho$ to have a high probability of $\tilde{\bSigma}$ being positive definite. We use $\rho = 0.01$ in the experiments. Unlike the AdaSSP mechanism of \citet{wang_revisiting_2018} this approach is inspired by, we do not use a DP estimate of the minimum eigenvalue, because the smallest eigenvalues of kernel matrices are typically very close to zero, which means that adding a DP estimate of the eigenvalue would mostly just add noise and drain the privacy budget.

To evaluate $\bS_2$, we observe that the DP mechanism adds two noise terms $\bE_a$ and $\bE_b$ that impact $\bm$. As the two noise terms are independent, their impacts can be evaluated separately. As $\bE_a \sim \mathcal{N}(0, \sigma_a^2 \bI)$ is purely additive, the covariance added by it can be computed easily to obtain
\begin{equation}
    \bS_{2,1} = \mathrm{Cov}[\sigma^{-2} \bK_{ZZ}\tilde{\bSigma} \bE_a] = \sigma_a^2 \sigma^{-4} \bK_{ZZ} \tilde{\bSigma}^2 \bK_{ZZ}.
\end{equation}
As $\bE_b$ is not additive, we estimate its impact by linearising the expression for $\bm$ in $(\bE_b)_{ij}$ to obtain
\begin{align}
    \bm &\approx \bm(\bE_b = \bE_b^*) + \frac{\partial \bm}{\partial (\bE_b)_{ij}} (\bE_b)_{ij} \\
    \frac{\partial \bm}{\partial (\bE_b)_{ij}} &= - \sigma^{-4} \bK_{ZZ} \tilde{\bSigma} \bE_{ij} \tilde{\bSigma} (\bA + \bE_a),
\end{align}
where $(\bE_{ij})_{kl} = \delta(i=k) \delta(j=l)$, i.e. it is a matrix of all zeros except a single 1 at position $(i,j)$.

This allows computing the additional covariance component due to $\bE_b$ as a sum of contributions of the independent elements $(\bE_b)_{ij}$ added to the upper triangular matrix as
\begin{multline}
    \bS_{2,2} = \sum_{i=1}^{|Z|} \sigma^{-8} \sigma_b^2 \bK_{ZZ} \tilde{\bSigma} \bE_{ii} \tilde{\bA} \bE_{ii} \tilde{\bSigma} \bK_{ZZ} \\
    + \sum_{i=1}^{|Z|} \sum_{j=i+1}^{|Z|} \sigma^{-8} \frac{\sigma_b^2}{2} \bigg[ \bK_{ZZ} \tilde{\bSigma} \bE_{ij} \tilde{\bA} \bE_{ji} \tilde{\bSigma} \bK_{ZZ} \\
    + \bK_{ZZ} \tilde{\bSigma} \bE_{ji} \tilde{\bA} \bE_{ij} \tilde{\bSigma} \bK_{ZZ} \bigg],
\end{multline}
where $\tilde{\bA} = \tilde{\bSigma} (\bA + \bE_a) (\bA + \bE_a)^T \tilde{\bSigma}$.

The final covariance of DP $q(\bu)$ is thus
\begin{equation}
    \bS = \bK_{ZZ} \tilde{\bSigma} \bK_{ZZ} + \bS_2 = 
    \bK_{ZZ} \tilde{\bSigma} \bK_{ZZ} + \bS_{2,1} + \bS_{2,2}.
\end{equation}
The additional terms in the covariance only depend on the previously released DP quantities and therefore have no additional impact on the privacy.

\subsection{Final Algorithm and its Privacy}

\begin{algorithm}
\begin{algorithmic}
\Function{DP-GP-inference}{$\bX, \bY, \bZ, k(\cdot, \cdot), \sigma$, $c$, $R_y, \epsilon, \delta$}
  \State Compute $R_k$ using Eq. \eqref{eq:sensbasic}, \eqref{eq:sens1d}, \eqref{eq:sensnd} or \eqref{eq:sensqe}
  \State Compute sensitivity $\Delta$ for DP-mechanism using Eq. \eqref{eq:dpsens} with $\sigma_a/\sigma_b=c$
  \State Compute $\sigma_a$ for $(\epsilon, \delta)$-DP $\Delta$-sensitive analytical Gaussian mechanism \citep{balle_improving_2018}
  \State $\sigma_b \gets \sigma_a / c$
  \State Compute $q(\bu) = \mathcal{N}(\bu;\;\bm, \bS)$ on $\bZ$ using Eq. \eqref{eq:q_u_DP}
  \State \Return $\bm, \bS$
\EndFunction
\end{algorithmic}
\caption{Algorithm for DP-GP inference using known hyperparameters.}
\label{alg:inference}
\end{algorithm}

\begin{theorem}
The DP-GP inference in Algorithm \ref{alg:inference} is $(\epsilon, \delta)$-DP with respect to the input data set $(\bX, \bY)$.
\end{theorem}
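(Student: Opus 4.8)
The plan is to reduce the privacy analysis to a single application of the analytical Gaussian mechanism followed by post-processing. First I would observe, as already noted around Eq.~\eqref{eq:AB_def}, that the private data $(\bX,\bY)$ enter the computation of $q(\bu)$ only through the two statistics $\bA = \sum_i \bk_i y_i$ and $\bB = \sum_i \bk_i \bk_i^T$ (equivalently its vectorised upper triangle $\hat{\bB}$), since $\bK_{ZZ}$, $\sigma$, $\lambda$, $\rho$ and everything else appearing in Eq.~\eqref{eq:q_u_DP} and in $\bS_{2,1}, \bS_{2,2}$ is determined by the public inputs $\bZ, k(\cdot,\cdot), \sigma$ alone. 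Hence it suffices to show (i) that the noisy release $\mathcal{M}(\bX,\bY)$ of Eq.~\eqref{eq:gaussmech} is $(\epsilon,\delta)$-DP, and (ii) that $(\bm,\bS)$ is a data-independent function of $\mathcal{M}(\bX,\bY)$, so that DP is preserved by the post-processing invariance of differential privacy.

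For (i), I would invoke the sensitivity bound: under the stated assumptions (a bounded kernel giving $\|\bk_i\|_2 \le R_k$ via Eq.~\eqref{eq:sensbasic}, or one of the tighter bounds \eqref{eq:sens1d}--\eqref{eq:sensqe}, together with $|y_i| \le R_y$), the map $(\bX,\bY) \mapsto [\bA;\hat{\bB}]$, after rescaling its two blocks by $1/\sigma_a$ and $1/\sigma_b$, has $L_2$-sensitivity exactly $\Delta$ as in Eq.~\eqref{eq:dpsens} --- this is the supplementary derivation adapting \citet{kulkarni_differentially_2021}. Adding isotropic noise $\mathcal{N}(0,\sigma_a^2\bI)$ to that rescaled map is precisely the analytical Gaussian mechanism of \citet{balle_improving_2018} applied to a $\Delta$-sensitive function; undoing the rescaling reproduces the block-diagonal noise of Eq.~\eqref{eq:gaussmech} with $\sigma_b = \sigma_a/c$. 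Because the algorithm sets $\sigma_a$ to the noise level returned by the analytical Gaussian mechanism calibration for this $\Delta$, $\epsilon$, $\delta$, the release $\mathcal{M}(\bX,\bY)$ is $(\epsilon,\delta)$-DP.

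For (ii), Eq.~\eqref{eq:q_u_DP} computes $\tilde{\bSigma}$, $\bm$, and $\bK_{ZZ}\tilde{\bSigma}\bK_{ZZ}$ as fixed functions of $(\bA+\bE_a, \bB+\bE_b)$ --- i.e.\ of $\mathcal{M}(\bX,\bY)$ --- and public constants, and $\bS_{2,1}$, $\bS_{2,2}$, $\tilde{\bA}$ are functions of $\tilde{\bSigma}$, $\bA+\bE_a$ and public constants, exactly as the excerpt stresses (``the additional terms in the covariance only depend on the previously released DP quantities''). Thus $(\bm,\bS) = \Phi(\mathcal{M}(\bX,\bY))$ for a map $\Phi$ that never re-accesses $(\bX,\bY)$, and post-processing yields the theorem. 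I would also note that $\lambda$ from Eq.~\eqref{eq:lambda} depends only on $\sigma_b,\sigma,|Z|,\rho$, all public, so Lemma~6 of \citet{wang_revisiting_2018} enters only as a utility statement (high-probability positive-definiteness of $\tilde{\bSigma}$) and is irrelevant to the privacy argument.

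The step needing the most care is (i): matching the heteroscedastic block-diagonal noise of Eq.~\eqref{eq:gaussmech} to the isotropic analytical Gaussian mechanism via the $1/\sigma_a$, $1/\sigma_b$ rescaling, and verifying that the rescaled stacked vector has sensitivity exactly the $\Delta$ of Eq.~\eqref{eq:dpsens} uniformly over all substitute-neighbouring datasets --- this is where both $\|\bk_i\|_2 \le R_k$ and $|y_i| \le R_y$ are used, and where the cross term $2R_y^2 R_k^2$ arises from changing one pair $(\bx_i,y_i)$ simultaneously. Everything downstream is routine post-processing bookkeeping.
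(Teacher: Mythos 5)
Your proposal is correct and takes essentially the same route as the paper's own (much terser) proof: privacy follows from a single application of the Gaussian mechanism to the sufficient statistics $\bA$ and $\hat{\bB}$ of Eq.~\eqref{eq:AB_def}, calibrated via the sensitivity of Eq.~\eqref{eq:dpsens}, with everything in Eq.~\eqref{eq:q_u_DP} and the covariance corrections being data-independent post-processing. The extra detail you supply on the block rescaling by $\sigma_a/\sigma_b$ to reduce to an isotropic analytical Gaussian mechanism is exactly the content the paper delegates to its supplementary lemma, and your observation that $\lambda$ and Lemma~6 of \citet{wang_revisiting_2018} play only a utility role is accurate.
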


\begin{proof}
The only part of the algorithm using the input data is the computation of $q(\bu)$ using Eq.~\eqref{eq:q_u_DP}. This update only accesses the data via Eq.~\eqref{eq:AB_def}. These are made private using the Gaussian mechanism of Eq.~\eqref{eq:gaussmech} whose noise variance is calibrated using the given privacy parameters and the sensitivity computed above.
\end{proof}

\section{Hyperparameter Learning}

GPs depend on a number of hyperparameters such as observation noise level and kernel parameters. These are commonly either inferred by applying Markov chain Monte Carlo (MCMC) to obtain a posterior distribution or optimised. In both cases, the process commonly makes use of the marginal likelihood of the parameters, obtained by analytically marginalising the GP and available in closed form for GP regression, or in the case of variational approximation the variational evidence lower bound (ELBO).

While the marginal likelihood and the ELBO can be written in closed form, their expressions have a complex dependence especially on the input data $\bX$, making it difficult to evaluate these quantities under DP.

To avoid this issue, we will instead use validation set likelihood as the objective for hyperparameter optimisation. We further assume independence of the validation set points in the likelihood to make the expression easier to evaluate under DP.

\subsection{DP Evaluation of Validation Set Likelihood}

Using standard variational GP techniques \citep{hensman_scalable_2015}, given an approximation $q(\bu) = \mathcal{N}(\bu;\; \bm, \bS)$ over values at inducing points $\bZ$ and hyperparameters $\theta$, the predictive distribution at a new set of noise-free validation points $\bV$ can be computed as
\begin{equation}
 \begin{split}
    p(\by_V | \bV, \bX, \bY, \theta) 
    &\approx \int_{\bu} p(\by_V | \bV, \bu, \theta) p(\bu | \bX, \by, \theta) \diff \bu\\ 
    &\approx \int_{\bu} p(\by_V | \bV, \bu, \theta) q(\bu | \theta) \diff \bu,
 \end{split}
\end{equation}
which can be evaluated analytically to obtain normal $\mathcal{N}(\hat{\bmu}, \hat{\bSigma})$ with
\begin{equation}
 \begin{split}
    &\hat{\bmu} = \bK_{VZ} \bK_{ZZ}^{-1} \bm, \\
    &\hat{\bSigma} = \bK_{VV} - \bK_{VZ} \bK_{ZZ}^{-1} (\bK_{ZZ} - \bS) \bK_{ZZ}^{-1} \bK_{ZV}.
 \end{split}
\end{equation}
Under our assumption of independence, we will only consider the diagonal of $\hat{\Sigma}$. This yields the likelihood of noisy validation set $(\bV, \by_V)$ as
\begin{equation}\label{eq:predlog}
 \begin{split}
    \log p(\by_V | \bV, \bX, \bY, \theta)
    &\approx \log \mathcal{N}(\by_V;\; \hat{\bmu}, \mathrm{diag}(\hat{\bSigma}) + \sigma^2 \bI_{|V|}) \\ 
    &= \sum_{i=1}^{|V|} \log \mathcal{N}(y_{V,i};\; \hat{\mu}_i, \hat{\Sigma}_{ii} + \sigma^2),
 \end{split}
\end{equation}
which only depends on $(\bX, \by)$ via $(\bm, \bS)$ and where each term of the sum only depends on the corresponding element of the validation set.

The usual way to estimate the sum under DP would be to bound each term and add noise to the result using for example Laplace or Gaussian mechanism. This is not easy in our case because the normal log-likelihood cannot easily be bounded tightly over a wide range of parameter values, but at the same time the values of the terms in the sum are often quite similar as they depend strongly on the model hyperparameters $\theta$.

To avoid these complications, we apply a generic mean estimation method \textsc{CoinPress} \citep{biswas_coinpress_2020} that uses a multi-step approach with iteratively tightening bounds to allow accurate estimation even if the original bounds are very loose. Our estimate of the log-likelihood is simply $|V|$ times the mean estimate returned by \textsc{CoinPress}.

We use \textsc{CoinPress} with centre and radius of the interval of possible values defined as $R_{\text{CP}} = \frac{R_y^2}{\sigma^2}, C_{\text{CP}} = -\frac{1}{2} \log(2\pi) - \log \sigma - \frac{R_y^2}{\sigma^2}$. Values outside this interval are clipped to the bounds of the interval. The justification of the bounds is in Supplementary material.
\textsc{CoinPress} operates under the zero-concentrated DP model $\rho$-zCDP. Given $(\epsilon, \delta)$, we use \text{CoinPress} with
\begin{equation}\label{eq:zcdp_conversion}
    \rho_{\text{CP}} = \left(\sqrt{\epsilon + \log(1/\delta)} - \sqrt{\log(1/\delta)}\right)^2,
\end{equation}
which guarantees that the result will be $(\epsilon, \delta)$-DP \citep{bun_concentrated_2016}. 
We use \textsc{CoinPress} with 12 iterations with $3/4$ of $\rho$ budget assigned to the last iteration and the rest split evenly between the preceding iterations.
Under very small privacy budget, \textsc{CoinPress} sometimes returns values outside the interval. Such values are ignored in our use in Algorithm \ref{alg:hypers}.

\subsection{DP Learning of Optimal Hyperparameters}

Using these DP estimates of the validation set log-likelihood, it is in principle possible to apply MCMC or standard optimisation techniques to infer the hyperparameters. This approach is not very practical, as these methods typically require evaluating the log-likelihood for a large number of candidate values, and without additional measures each of these would carry a significant privacy cost.

To avoid this erosion of privacy guarantees, we apply the method for private selection from private candidates proposed by \citet{liu_private_2019}, which allows privately selecting the optimal hyperparameters from a large set of possible values with $\mathcal{O}(\epsilon)$ privacy when each candidate model evaluation is $\epsilon$-DP. The corresponding guarantees for $(\epsilon, \delta)$-DP candidate models are weaker, but still offer substantial improvement over naive composition.

In order to apply the private selection algorithm, we need to formulate the composition of GP inference and validation set likelihood evaluation as a single DP mechanism over the union of disjoint training and validation sets. This generalises Lemma 5.1 of \citet{liu_private_2019} to cases where the hyperparameter goodness measure is not a counting query but a general $(\epsilon, \delta)$-DP mechanism.

Following \citet{liu_private_2019}, we define $\mathcal{M}_i(D_1)$, $i=1,\dots,K$ as the $(\epsilon, \delta)$-DP mechanisms for GP inference using training set $D_1$ using different hyperparameter values. Mechanisms $q_i(m, D_2)$ are $(\epsilon, \delta)$-DP mechanisms to evaluate the validation set $D_2$ log-likelihood of the model $m \sim \mathcal{M}_i(D_1)$. The final mechanism is $Q(D_1, D_2) \sim (m, q_i(m, D_2))$, where $i \sim \mathrm{Uniform}([K]), m \sim \mathcal{M}_i(D_1)$.

\begin{lemma}\label{lem:hyperpar_privacy}
Assume $D_1 \cap D_2 = \emptyset$, the mechanisms $\mathcal{M}_i(D_1), i=1,\dots,K$ are $(\epsilon, \delta)$-DP with respect to $D_1$, and the mechanisms $q_i(m, D_2), i=1,\dots,K$ are $(\epsilon, \delta)$-DP with respect to $D_2$. Then the joint mechanism $Q(D_1, D_2)$ defined above is $(\epsilon, \delta)$-DP with respect to $D = (D_1, D_2)$.
\end{lemma}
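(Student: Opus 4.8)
The plan is to exploit the disjointness of $D_1$ and $D_2$ so that a single substitution in $D=(D_1,D_2)$ touches only one block, and then handle each block by invoking exactly one of the two DP hypotheses while treating the remainder of $Q$ as post-processing. Since $D_1\cap D_2=\emptyset$ and we use the substitute neighbourhood, a neighbour $D'=(D_1',D_2')$ of $D$ differs from it in a single record, which lies in exactly one block; hence either $D_1\sim D_1'$ with $D_2=D_2'$, or $D_1=D_1'$ with $D_2\sim D_2'$. It therefore suffices to establish $\mathrm{Pr}[Q(D)\in O]\le e^{\epsilon}\mathrm{Pr}[Q(D')\in O]+\delta$ for arbitrary measurable $O$ in each of these two cases.

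In the first case ($D_1\sim D_1'$, $D_2$ fixed) I would condition on the index $i$, which is drawn from $\mathrm{Uniform}([K])$ independently of the data. Given $i$, the mechanism draws $m\sim\mathcal{M}_i(D_1)$ and then applies the map $m\mapsto(m,q_i(m,D_2))$; with $D_2$ and $i$ held fixed and $q_i$'s internal randomness independent of $D_1$, this is a randomised function of $m$ alone, i.e. a post-processing of $\mathcal{M}_i$. Since $\mathcal{M}_i$ is $(\epsilon,\delta)$-DP with respect to $D_1$ and post-processing preserves $(\epsilon,\delta)$-DP, the conditional law of $Q$ given $i$ satisfies the target inequality with respect to $D_1$. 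Averaging this inequality over $i$ with the data-independent weights $1/K$ preserves it, which gives the claim in this case.

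In the second case ($D_1$ fixed, $D_2\sim D_2'$) the pair $(i,m)$ with $i\sim\mathrm{Uniform}([K])$, $m\sim\mathcal{M}_i(D_1)$ has the same distribution under $D$ and $D'$, since generating it never touches $D_2$. Conditioning on $(i,m)$, the output is $(m,q_i(m,D_2))$ with $m$ a constant and $q_i(m,\cdot)$ an $(\epsilon,\delta)$-DP mechanism with respect to $D_2$ (here the hypothesis on $q_i$ is used for each fixed $m$; for the CoinPress-based estimator the clipping interval and noise scale do not depend on $m$, so this reading is valid). Appending the constant $m$ is again post-processing, so conditioned on $(i,m)$ the inequality holds with respect to $D_2$; integrating against the common law of $(i,m)$ yields $\mathrm{Pr}[Q(D)\in O]\le e^{\epsilon}\mathrm{Pr}[Q(D')\in O]+\delta$. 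Combining the two cases proves the lemma.

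The only real subtlety I anticipate is the dual role of $m$, which is simultaneously released in the output and fed as input to $q_i$; the argument above sidesteps any need to compose privacy losses across $D_1$ and $D_2$ by always conditioning on the upstream randomness — the data-independent $i$ in the first case and $(i,m)$ in the second — so that what remains is literally a post-processing of a single $(\epsilon,\delta)$-DP mechanism. This is precisely the structure that lets the result go through with no degradation of the privacy parameters, generalising Lemma 5.1 of \citet{liu_private_2019} from counting queries to an arbitrary $(\epsilon,\delta)$-DP goodness measure $q_i$.
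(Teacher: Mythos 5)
Your proof is correct and follows essentially the same route as the paper's: split on which block contains the substituted record, treat $q_i(m,\cdot)$ as post-processing of $\mathcal{M}_i$ when the change is in $D_1$, and condition on the identically distributed $(i,m)$ and invoke the DP of $q_i$ when the change is in $D_2$. Your version is somewhat more explicit about averaging the conditional inequalities over the data-independent randomness, but the argument is the same.
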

\begin{proof}
Let us consider two neighbouring data sets $D = (D_1, D_2)$ and $D' = (D_1', D_2')$ that differ by a single element. Depending on whether the different element is in the $D_1$ or $D_2$ part, we have two cases.

If the differing element is in the $D_1$ part, we have $D_2 = D_2'$ and thus $q_i(m, D_2) = q_i(m, D_2')$. The output of $Q(D_1, D_2)$ is post-processing of the $(\epsilon, \delta)$-DP mechanism $\mathcal{M}_i(D_1)$ and thus satisfies $(\epsilon, \delta)$-DP.

If the differing element is in the $D_2$ part, we have $D_1 = D_1'$ and thus $\mathcal{M}_i(D_1) = \mathcal{M}_i(D_1')$. As the distribution of $m$ is thus same under both scenarios, we have for any measurable set $S \subseteq \mathrm{Range}(Q)$ and for all $m$ and $i$,
\begin{equation*}
    \mathrm{Pr}((m, q_i(m, D_2)) \in S)
    \le e^{\epsilon} \mathrm{Pr}((m, q_i(m, D'_2)) \in S) + \delta
\end{equation*}
by $(\epsilon, \delta)$-DP of $q_i(m, D_2)$. Therefore $Q(D_1, D_2)$ is $(\epsilon, \delta)$-DP with respect to $(D_1, D_2)$.
\end{proof}

We can now define Algorithm \ref{alg:hypers} to find optimal hyperparameters using a bounded step variant of Algorithm 2 of \citet{liu_private_2019}. The algorithm repeatedly draws randomly from $Q(D_1, D_2)$ and keeps a record of the highest scoring candidate. It outputs the highest scoring candidate after every draw with probability $\gamma > 0$, or after at most $T = \frac{1}{\gamma} \log \frac{1}{\delta_2}$ draws, where $\delta_2 \in [0,1]$. Lemma \ref{lem:hyperpar_privacy} as well as Theorem 3.4 of \citet{liu_private_2019} show that when both GP inference and validation set log-likelihood evaluation are $(\epsilon, \delta)$-DP, the output of the hyperparameter inference algorithm is $(\epsilon_{\text{tot}}, \delta_{\text{tot}})$-DP with
\begin{equation}\label{eq:hyper_eps_tot}
    \epsilon_{\text{tot}} = 3\epsilon + 3 \sqrt{2\delta}, \quad
    \delta_{\text{tot}} = \sqrt{2 \delta} T + \delta_2.
\end{equation}

\subsection{Final Algorithm and its Privacy}

\begin{algorithm}
\begin{algorithmic}
\Function{DP-GP-learn-hyperparameters}{$\bX_1$, $\bY_1$, $\bX_2$, $\bY_2$, $\bZ$, $R_y$, $\boldsymbol{\Theta}$, $\epsilon_{\text{tot}}$, $\delta_{\text{tot}}$, $\gamma$}
    \State $t_0 \gets \textsc{find-root}(t[1-\log(t)] - \delta_{\text{tot}}, t)$
    \State $\delta \gets \gamma^2t_0^2/2;\;\; \delta_2 \gets \sqrt{2\delta}/\gamma;\;\; \epsilon \gets \epsilon_{\text{tot}}/3 - \sqrt{2\delta}$
    \State $T \gets \frac{1}{\gamma} \log \frac{1}{\delta_2};\;\; v_{\text{opt}} \gets -\infty$
    \For {$t = 1,\dots,T$}
        \State $i \sim \mathrm{Uniform}(1, \dots, k)$
        \State $\bm, \bS \gets \textsc{DP-GP-inference}$($\bX_1$, $\bY_1$, $\bZ$, $k_i(\cdot, \cdot), \sigma_i, R_y, \epsilon, \delta)$
        \State $v_j \gets \log p(\bY_{2,j} | \bX_{2,j}, \bX, \bY, \theta_i)$, $\quad j=1,\dots,|\bX_2|$ using Eq. \eqref{eq:predlog}
        \State $\tilde{v} \gets |\bX_2| \textsc{CoinPress-mean}$($\textsc{clip}(v_j$, $C_{\text{CP}}$, $R_{\text{CP}})$, $C_{\text{CP}}$, $R_{\text{CP}}$, $\rho_{\text{CP}})$
        \If {$\tilde{v} \le C_{\text{CP}}+R_{\text{CP}}$ \textbf{ and } $\tilde{v} > v_{\text{opt}}$}
            \State $v_{\text{opt}} \gets \tilde{v};\;\; m_{\text{opt}} \gets (\bm, \bS)$
        \EndIf
        \If {$\texttt{rand}(1) < \gamma$}
            \State \Return ($m_{\text{opt}}, v_{\text{opt}}$)
        \EndIf
    \EndFor
    \State \Return ($m_{\text{opt}}, v_{\text{opt}}$)
\EndFunction
\end{algorithmic}
\caption{Algorithm for finding optimal DP-GP hyperparameters within a set of candidate kernels and noise variances $\boldsymbol{\Theta} = (\boldsymbol{\theta_1} = (k_1, \sigma_1), \dots, \boldsymbol{\theta}_k = (k_k, \sigma_k))$.}
\label{alg:hypers}
\end{algorithm}

\begin{theorem}
Algorithm \ref{alg:hypers} is $(\epsilon_{\text{tot}}, \delta_{\text{tot}})$-DP with respect to the combination of the training and validation sets $((\bX_1, \bY_1), (\bX_2, \bY_2))$.
\end{theorem}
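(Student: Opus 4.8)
The plan is to verify that Algorithm~\ref{alg:hypers} is a faithful instantiation of the bounded-step private selection procedure (Algorithm~2 of \citet{liu_private_2019}) applied to the joint mechanism $Q(D_1, D_2)$, and then invoke Lemma~\ref{lem:hyperpar_privacy} together with Theorem~3.4 of \citet{liu_private_2019} with the parameter relations \eqref{eq:hyper_eps_tot}. First I would identify the ingredients: set $D_1 = (\bX_1, \bY_1)$, $D_2 = (\bX_2, \bY_2)$, note $D_1 \cap D_2 = \emptyset$ by construction (disjoint training and validation sets), and confirm that each per-candidate mechanism is $(\epsilon,\delta)$-DP for the $\epsilon,\delta$ computed in the first lines of the algorithm. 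Concretely, \textsc{DP-GP-inference} with parameters $(\epsilon,\delta)$ is $(\epsilon,\delta)$-DP with respect to $D_1$ by the first theorem of the paper; and the \textsc{CoinPress}-based validation-likelihood estimate is $(\epsilon,\delta)$-DP with respect to $D_2$ by the $\rho$-zCDP guarantee of \textsc{CoinPress} with $\rho_{\text{CP}}$ as in \eqref{eq:zcdp_conversion}, converted to $(\epsilon,\delta)$-DP via the standard zCDP-to-DP conversion \citep{bun_concentrated_2016}. (The clipping of $v_j$ to $[C_{\text{CP}}, C_{\text{CP}}+R_{\text{CP}}]$ and the later discarding of out-of-range $\tilde v$ are post-processing and data-independent filtering, so they do not affect the privacy claim.)

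Next I would assemble these into $Q$. One iteration of the \texttt{for} loop draws $i \sim \mathrm{Uniform}([k])$, runs $\mathcal{M}_i(D_1)$ to get $m=(\bm,\bS)$, then runs $q_i(m, D_2)$ to get $\tilde v$; this is exactly the definition of $Q(D_1,D_2)$ preceding Lemma~\ref{lem:hyperpar_privacy}. Hence by Lemma~\ref{lem:hyperpar_privacy}, each loop iteration is an $(\epsilon,\delta)$-DP mechanism with respect to $D=(D_1,D_2)$. The outer loop --- keep a running argmax of the score, return early after each draw with probability $\gamma$, and in any case stop after $T = \frac{1}{\gamma}\log\frac{1}{\delta_2}$ draws --- is precisely the bounded-step variant of \citet{liu_private_2019}'s Algorithm~2 with base mechanism $Q$. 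Therefore Theorem~3.4 of \citet{liu_private_2019} applies and yields that the output is $(\epsilon_{\text{tot}}, \delta_{\text{tot}})$-DP with $\epsilon_{\text{tot}} = 3\epsilon + 3\sqrt{2\delta}$ and $\delta_{\text{tot}} = \sqrt{2\delta}\,T + \delta_2$, which is \eqref{eq:hyper_eps_tot}.

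The last step is to check that the parameter bookkeeping at the top of Algorithm~\ref{alg:hypers} inverts these relations consistently, so that the \emph{input} targets $(\epsilon_{\text{tot}}, \delta_{\text{tot}})$ are actually met. From $\delta = \gamma^2 t_0^2/2$ and $\delta_2 = \sqrt{2\delta}/\gamma = t_0$, we get $T = \frac{1}{\gamma}\log\frac{1}{t_0}$, so $\delta_{\text{tot}} = \sqrt{2\delta}\,T + \delta_2 = \gamma t_0 \cdot \frac{1}{\gamma}\log\frac{1}{t_0} + t_0 = t_0(1 - \log t_0)$; the \textsc{find-root} call chooses $t_0$ so that $t_0(1-\log t_0) = \delta_{\text{tot}}$, matching exactly. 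Likewise $\epsilon = \epsilon_{\text{tot}}/3 - \sqrt{2\delta}$ gives $3\epsilon + 3\sqrt{2\delta} = \epsilon_{\text{tot}}$. I would also remark that $T$ as computed is a deterministic, data-independent bound on the number of iterations, as required for the bounded-step theorem, and that $\delta_2 \in [0,1]$ holds for the chosen $\gamma, \delta_{\text{tot}}$.

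I expect the main obstacle to be the rigorous justification that one iteration of the loop realizes $Q$ \emph{as a single mechanism} in the sense required by \citet{liu_private_2019}'s selection theorem --- in particular, that Lemma~\ref{lem:hyperpar_privacy} gives the per-call guarantee uniformly over the internal randomness (the index $i$ and the sampled model $m$), since the private-selection analysis treats each call as an i.i.d.\ draw from a fixed $(\epsilon,\delta)$-DP mechanism. The proof of Lemma~\ref{lem:hyperpar_privacy} already handles the two neighbouring-set cases (difference in $D_1$ vs.\ $D_2$), so the remaining work is mostly to state clearly that this yields the hypothesis of Theorem~3.4 and that the $(\epsilon,\delta)$-DP (rather than pure-DP) setting is the one covered by the weaker constants in \eqref{eq:hyper_eps_tot}. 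Everything else is routine verification of the parameter substitutions.
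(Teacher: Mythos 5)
Your proposal is correct and follows essentially the same route as the paper's own proof: establish that one loop iteration instantiates the joint mechanism $Q$ (via Lemma~\ref{lem:hyperpar_privacy} and the $(\epsilon,\delta)$-DP of \textsc{DP-GP-inference} and \textsc{CoinPress}), invoke Theorem~3.4 of \citet{liu_private_2019} for the bounded-step selection, and verify that the parameter assignments at the top of Algorithm~\ref{alg:hypers} invert \eqref{eq:hyper_eps_tot}. Your explicit check that $\delta_{\text{tot}} = t_0(1-\log t_0)$ with $\delta_2 = t_0$ is in fact more detailed than the paper's version, which only remarks that $\delta_2 = \sqrt{2\delta}/\gamma$ is chosen to minimise $\delta_{\text{tot}}$ and that the equation is solved numerically.
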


The proof, which follows from Lemma \ref{lem:hyperpar_privacy} and Theorem 3.4 of \citet{liu_private_2019}, is included in the Supplementary material.

\begin{figure}[th]
    \centering
    \includegraphics[width=0.9\textwidth]{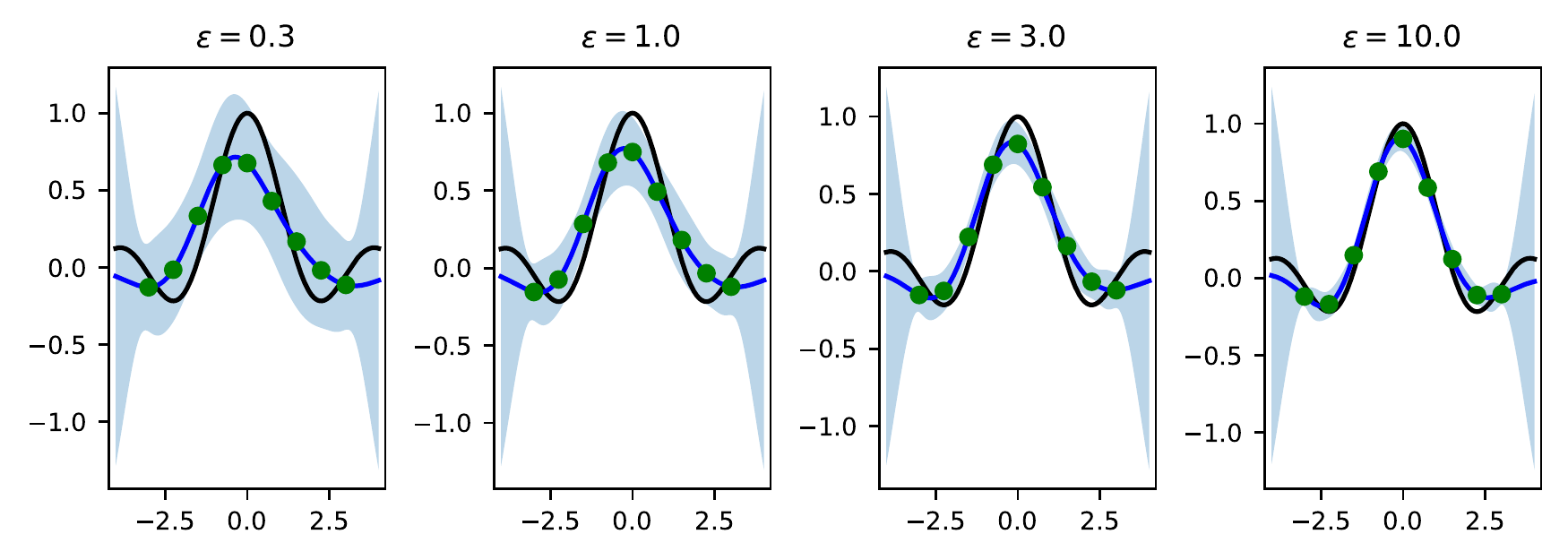}
    \caption{Illustration of DP-GP inference with varying $\epsilon$ and $\delta = 10^{-4}$ on a data set of $N=1024$ points sampled from the black function with noise. The blue line is the posterior mean for the GP with shaded region denoting 2 sd confidence region. Green dots denote means of inducing point values under $q(\bu)$.}
    \label{fig:illustration}
\end{figure}

\section{Experiments}

\subsection{Inference of a Noisy Function}
\label{sec:demo_experiment}

We first illustrate the DP-GP inference with a task of inferring function $f(x) = \sin(2x)/2x$ from $N=1024$ noisy observations on interval [-4, 4]. We place $|Z|=9$ inducing points on a regular grid between $-3$ and $3$. We assume the noise standard deviation $\sigma = 0.1$ is known. We use exponentiated quadratic (a.k.a.\ squared exponential) kernel with length scale $l = 1.0$. The results in Fig.~\ref{fig:illustration} show how the model becomes more accurate as $\epsilon$ increases and the privacy becomes less tight, but the posterior confidence intervals for the true function scale with the privacy level.

\begin{figure}[t]
    \centering
    \includegraphics[width=0.9\textwidth]{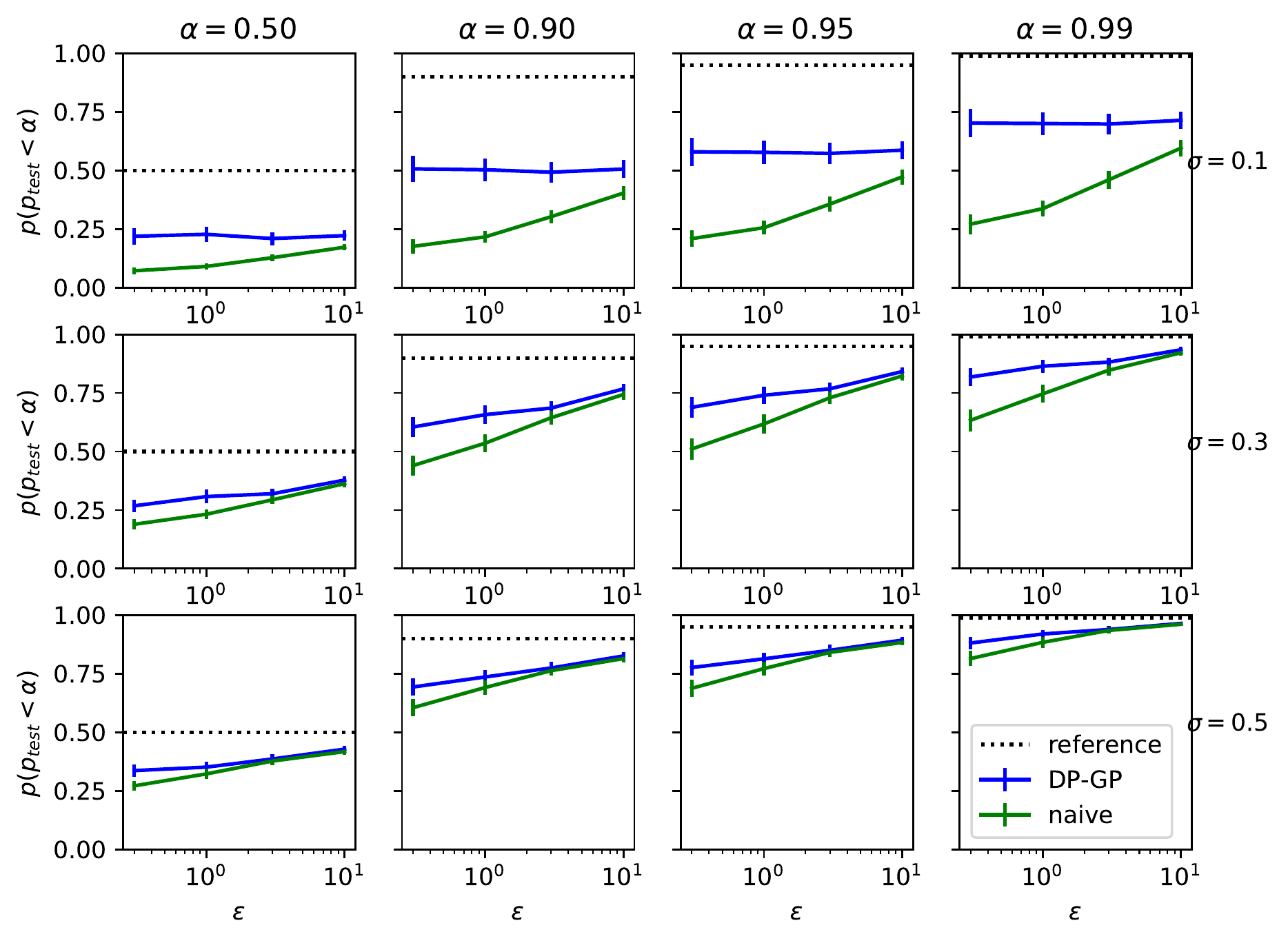}
    \caption{Uncertainty calibration accuracy of the proposed DP-GP algorithm and a naive algorithm omitting the additional posterior covariance contributions due to noise added for DP. The curves show the fraction of test points within $\alpha$ high posterior density region. The error bars indicate two standard errors of the mean after 40 repeats.}
    \label{fig:calibration}
\end{figure}

\subsection{Uncertainty Calibration}

In this experiment we test the calibration accuracy of the predictive uncertainty of the learned GP model.
To avoid model mismatch, we draw the target function from a GP with exponentiated quadratic covariance with $l=1.0$, and simulate $N_{\text{tot}}=1024$ noisy observations from this function on the interval $[-4, 4]$. The observations are divided to training and test sets of equal size, and a DP-GP is fitted using the training set to $|Z|=15$ inducing points regularly spaced on the interval $[-3.5, 3.5]$. We then evaluate predictions of the model for the test set points, and check the fraction of test set points that are within the central high posterior density region covering fraction $\alpha$ of the posterior. If the posterior uncertainty is properly calibrated, the observed fraction should be close to $\alpha$.

Fig.~\ref{fig:calibration} shows the calibration results for a range of values of interval width $\alpha$ (columns), observation noise standard deviations $\sigma$ (rows) and privacy levels $\epsilon$. The value $\delta = 10^{-4}$ is used in all cases. The results show a comparison of the proposed full model ("DP-GP") with a naive model that omits the DP uncertainty calibration in the posterior covariance ("naive") relative to the reference level $\alpha$ ("reference").

The results indicate that the additional terms significantly improve the calibration, especially for low observation noise and tight privacy where the calibration of the naive model is the poorest. The proposed model is still overconfident in many cases, possibly due to ignoring the error caused by the additional necessary regularisation from Eq. \eqref{eq:lambda}. It is noteworthy that the DP-GP calibration is significantly less sensitive to $\epsilon$, which suggests it is able to capture the additional uncertainty caused by DP. The runtime of the experiment using 40 repeats in 48 different scenarios was less than 30 seconds on a modern laptop CPU.

\begin{figure}[t]
    \centering
    \includegraphics[width=\textwidth]{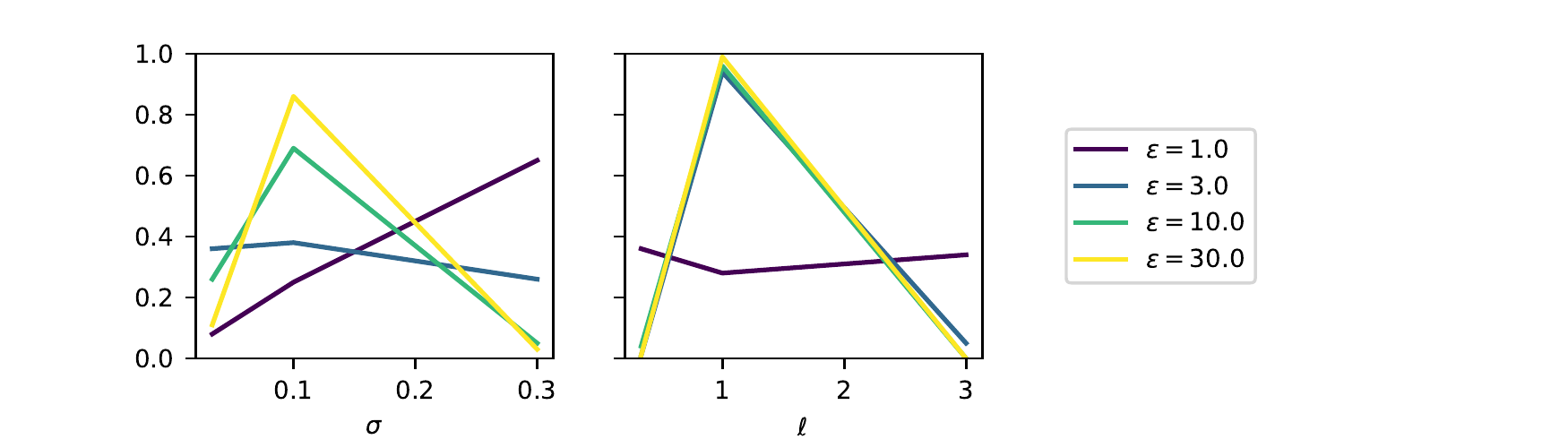}
    \caption{Probability of finding specific observation noise $\sigma$ and length scale $\ell$ values in hyperparameter learning experiment after 100 repeats.}
    \label{fig:hyperparams}
\end{figure}

\subsection{Hyperparameter Learning}

We demonstrate the accuracy of hyperparameter learning using a set of $N=2048$ points generated similarly as in Sec.~\ref{sec:demo_experiment}, split evenly to training and validation sets. We use a search among 9 combinations of 3 observation noise levels $\sigma \in \{0.1/3, 0.1, 0.3\}$ and 3 kernel length scales $\ell \in \{1/3, 1.0, 3.0\}$, where consecutive values differ by a factor of 3. The results shown in Fig.~\ref{fig:hyperparams} show that $\epsilon=3.0$ is needed for any sensible results and observation noise level estimates improve up to $\epsilon=30.0$. The accuracy and runtime can be traded off with parameter $\gamma$. 100 repeats with 4 values of $\epsilon$ took approximately 10 minutes on a modern laptop CPU with $\gamma = 0.01$.

\begin{figure}[t]
    \centering
    \includegraphics[width=0.8\linewidth]{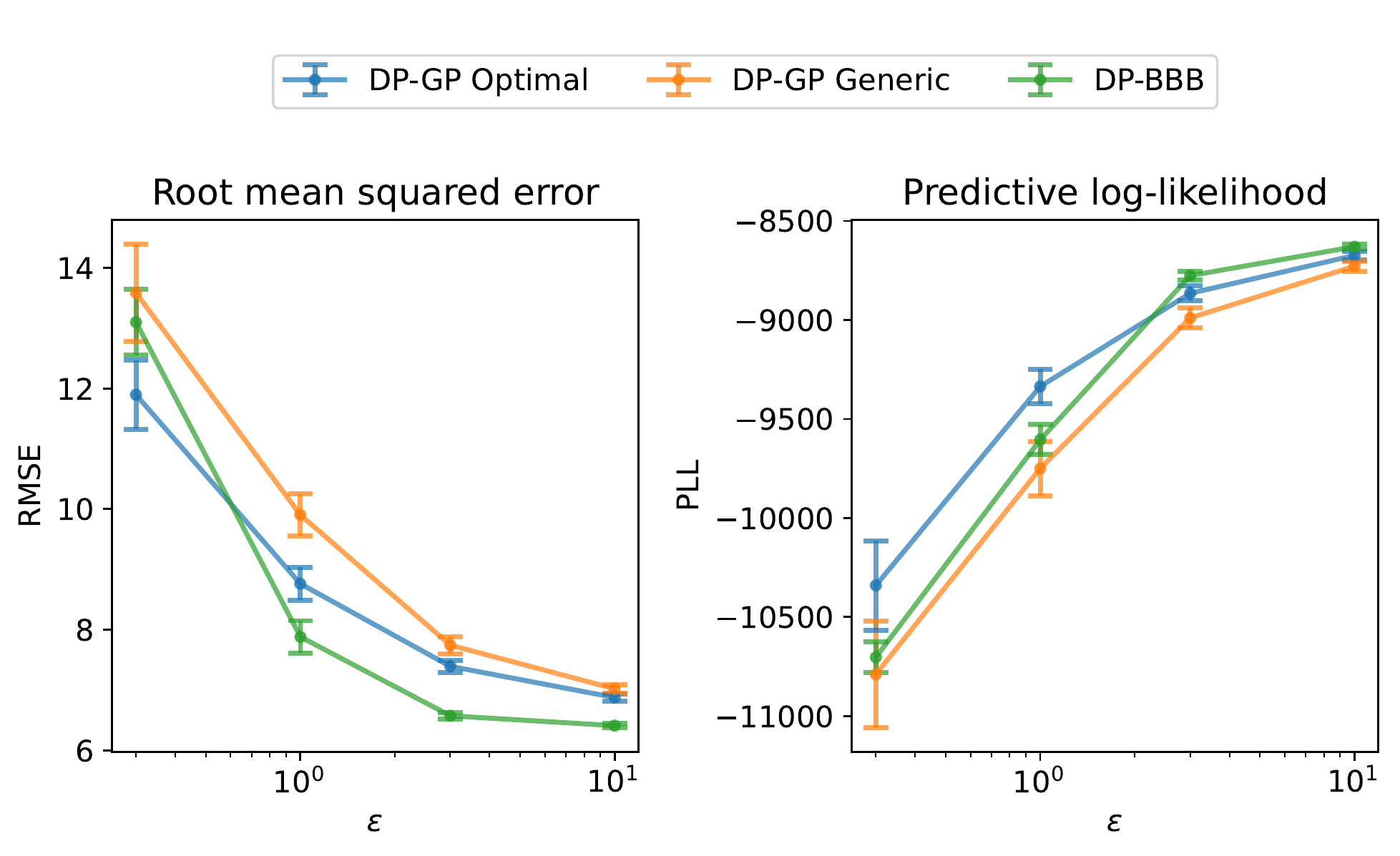}
    \caption{RMSE and predictive log-likelihood as functions of $\epsilon$ for DP-GP using two methods of computing kernel norm upper bound and DP-BBB. For DP-GP optimal, tight bound is computed using the information on kernel and inducing point locations with Eqs.~\eqref{eq:sensqe} and~\eqref{eq:senshes}. For DP-GP generic, the bound is computed using Eq.~\eqref{eq:sensnd}. The error bars represent one standard error of the mean after 100 repeats.}
    \label{fig:dpgp-dpbbb}
\end{figure}

\subsection{Inference with Real-World Data}

Finally, we test the DP-GP inference on 2-dimensional real-world data that contains the ages, weights, and heights of 5189 Pakistani girls~\citep{asif_dataset_2021}. Our task is to build GP model for predicting height from age and weight while preserving the privacy of both inputs and outputs. We compare our method with a DP version of Bayes by Backprop~\citep{blundell_weight_2015}, DP-BBB~\citep{zhang_differentially_2021} using the Fourier accountant~\citep{koskela_computing_2020} for privacy accounting. In Fig.~\ref{fig:dpgp-dpbbb}, we show the results of applying DP-GP with two different sensitivity computations and DP-BBB to the task. The optimal kernel norm upper bound is computed using the specific information on the kernel function, its hyperparameters, and the locations of the inducing points using Eq.~\eqref{eq:sensqe}. For generic computation, we use Eq.~\eqref{eq:sensnd}.

The models are trained on a randomly sampled 50\% of the data and validated on the rest. For DP-GP, we use a regular 3 by 3 grid of inducing points and the average height in the training set computed privately with \textsc{CoinPress} for the prior mean. The kernel is a product of variance $2.2^2$ and two 1-dimensional exponentiated quadratic kernels with lengthscales of 3 and 3.5 for age and weight dimensions. For DP-BBB, we set the learning rate to $10^{-2}$ and use a two-layered network with 200 hidden units similarly to \citet{zhang_differentially_2021}. We train the model for 200 epochs with 40 batches. We validate the model by sampling the trained network for 100 predictions per data point.

With low values of $\epsilon$, DP-GP optimal provides improvements over DP-BB in terms of both RMSE and predictive log-likelihood and the optimal bound for DP-GP produces better results than the generic bound for all levels of privacy. When the value of $\epsilon$ is high, DP-BBB outperforms DP-GP with regard to both metrics, although it runs for a significantly longer time. The runtime for DP-GP and DP-BBB with one set of parameters is $0.11 \pm 0.0051$ and $3.6 \pm 0.033$ seconds (mean $\pm$ standard deviation), respectively.

\section{Discussion}

We have presented the basic algorithms needed for learning GPs under DP protection for both inputs and outputs, significantly improving privacy over previous privacy-preserving GPs. DP-GP publishes a variational approximation on a set of fixed inducing points. This is computed using analytical Gaussian mechanism for releasing a function of input-vs-inducing-point cross kernels and outputs.

The largest remaining theoretical challenge for GP inference part is the need to add the diagonal regularising term to ensure the perturbed second moment matrix is positive definite. Avoiding this is not easy, but it would be interesting to explore if the methods proposed for linear regression by \citet{sheffet_old_2019} could help.

Because our mechanism operates on the kernel matrix rather than input data, privacy bounds depend on the number of inducing points but not on input dimensionality. In practice, input dimensionality will impact the method as it affects the number of inducing points required for a good approximation. The method also works on non-vectorial data such as graphs or strings.

Important questions for future work include optimising the ratio $\sigma_a/\sigma_b$ that we assumed to be even, possibly by making use of terms of $\bS_2$. Co-developing kernels with small $\|\bK_{ZX_i}\|_2$ such as limited range kernels and better bounds can help improve utility.  Finding more efficient algorithms for hyperparameter learning and finding good inducing inputs, as well as testing the method in different practical applications would also be important.

\subsection*{Acknowledgements}

The authors would like to thank Michalis Titsias and Razane Tajeddine for useful comments and suggestions.
This work has been supported by the Academy of Finland (Finnish Center for Artificial Intelligence FCAI and grant 325573) as well as by the Strategic Research Council at the Academy of Finland (grant 336032).

\bibliographystyle{myabbrvnat}
\bibliography{references}

\appendix

\section{Sensitivity for the Gaussian mechanism for DP-GP inference}

Following \citet{kulkarni_differentially_2021}, we define functions
\begin{align}
    t_a(\bx) &= \bx, \label{eq:t1_appendix} \\
    t_b(\bx) &= [x_1^2 \cdots x_d^2 \;\; \sqrt{2}x_1 x_2 \cdots \sqrt{2}x_{d-1}x_d]^T. \label{eq:t2_appendix}
\end{align}

Using this notation, the quantities $\bA$ and $\bB$ can be expressed as sums of $y_i t_a(\bk_i)$ and $t_b(\bk_i)$, respectively.

The following Lemma and its proof are analogous to Lemma 3.3 of \citet{kulkarni_differentially_2021}:
\begin{lemma}
Let $t_a$ and $t_b$ be defined as in Eqs. \eqref{eq:t1_appendix} and \eqref{eq:t2_appendix} and let $\sigma_a, \sigma_b > 0$. Consider the mechanism
\[ \mathcal{M}(\bX, \bY) = \sum_{i=1}^{|X|} \begin{bmatrix} y_i t_a(\bk_i) \\ t_b(\bk_i) \end{bmatrix}
    + \mathcal{N}\left(0, \mathrm{diag}(\sigma_a^2 \bI_{d}, \sigma_b^2 \bI_{d_2})\right),
\]
where $d_2 = \binom{d+2}{2}$, with $d = |Z|$. Assuming $\|\bk_i\|_2 \le R_k$ and $|y_i| \le R_y$, $\mathcal{M}$ can be reduced to a Gaussian mechanism with noise variance $\sigma_a^2$ and sensitivity
\begin{equation}\label{eq:dpsens_appendix}
    \Delta = \sqrt{\frac{\sigma_b^2}{2 \sigma_a^2}R_y^4 + 2 R_y^2 R_k^2 + 2\frac{\sigma_a^2}{\sigma_b^2} R_k^4}.
\end{equation}
\end{lemma}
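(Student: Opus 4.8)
The plan is to use the augmented second-moment reduction of \citet{kulkarni_differentially_2021}: rescale the two noise blocks to a common variance $\sigma_a^2$ and, at the same time, pack both $\bA = \sum_i y_i\bk_i$ and the matrix $\bB = \sum_i \bk_i\bk_i^T$ into a single rank-one outer product by stacking a rescaled copy of $y_i$ on top of a rescaled copy of $\bk_i$. Concretely, I would set $\beta = \sqrt{\sigma_b/(2\sigma_a)}$, $\gamma = \sqrt{\sigma_a/\sigma_b}$, $\mathbf{w}_i = [\beta y_i;\ \gamma \bk_i] \in \mathbb{R}^{d+1}$, and let $\phi$ be the vectorisation of the symmetric matrix $\mathbf{w}\mathbf{w}^T$ in which each off-diagonal entry is multiplied by $\sqrt{2}$. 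This $\sqrt{2}$-weighting makes $\phi$ an isometry of the Frobenius geometry on rank-one symmetric matrices, so that $\|\phi(\mathbf{w})\|_2 = \|\mathbf{w}\mathbf{w}^T\|_F$ and $\langle \phi(\mathbf{w}),\phi(\mathbf{w}')\rangle = \langle\mathbf{w},\mathbf{w}'\rangle^2$; note $\phi(\mathbf{w}_i)$ has exactly $\binom{d+2}{2} = d_2$ coordinates.

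The reduction step then goes as follows. Reading off the blocks of $\mathbf{w}_i\mathbf{w}_i^T$: the $y$--$\bk$ cross entries contribute $\sqrt{2}\,\beta\gamma\, y_ik_{ij} = y_ik_{ij}$ (because $\sqrt{2}\,\beta\gamma = 1$), so those coordinates of $\sum_i\phi(\mathbf{w}_i)$ are exactly $\bA$; the $\bk$--$\bk$ block contributes $\gamma^2 t_b(\bk_i)$, so those coordinates sum to $(\sigma_a/\sigma_b)\hat{\bB}$; and the one remaining coordinate is the harmless term $\beta^2 y_i^2$. Hence there is a fixed linear map $P$ --- the identity on the cross coordinates, multiplication by $\sigma_b/\sigma_a$ on the $\bk$--$\bk$ coordinates, and deletion of the $y^2$ coordinate --- with $P\big(\sum_i\phi(\mathbf{w}_i)\big) = [\bA;\ \hat{\bB}]$ and $P P^T = \mathrm{diag}(\bI_d,\ (\sigma_b/\sigma_a)^2\bI)$. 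Applying $P$ to the isotropic Gaussian mechanism $G(\bX,\bY) = \sum_i \phi(\mathbf{w}_i) + \mathcal{N}(0,\sigma_a^2\bI)$ therefore reproduces $\mathcal{M}$ in distribution, since $P\,\mathcal{N}(0,\sigma_a^2\bI) = \mathcal{N}(0,\sigma_a^2 P P^T) = \mathcal{N}(0,\mathrm{diag}(\sigma_a^2\bI_d,\sigma_b^2\bI))$. So $\mathcal{M}$ is a post-processing of $G$, and its privacy is that of a Gaussian mechanism with noise variance $\sigma_a^2$ applied to $f := \sum_i\phi(\mathbf{w}_i)$; it then remains only to bound $\Delta_f$.

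To bound the sensitivity of $f$, I would take substitute-neighbouring data sets whose single differing point has feature vectors $\mathbf{w}$ and $\mathbf{w}'$. From $\|\mathbf{w}\|_2^2 = \beta^2 y^2 + \gamma^2\|\bk\|_2^2 \le \tfrac{\sigma_b}{2\sigma_a}R_y^2 + \tfrac{\sigma_a}{\sigma_b}R_k^2 =: B^2$ (and likewise for $\mathbf{w}'$) and the isometry identities,
\[
\|f(\bX,\bY) - f(\bX',\bY')\|_2^2 = \|\mathbf{w}\mathbf{w}^T - \mathbf{w}'(\mathbf{w}')^T\|_F^2 = \|\mathbf{w}\|_2^4 - 2\langle\mathbf{w},\mathbf{w}'\rangle^2 + \|\mathbf{w}'\|_2^4 \le 2B^4 .
\]
Thus $\Delta_f \le \sqrt{2}\,B^2 = \sqrt{2}\big(\tfrac{\sigma_b}{2\sigma_a}R_y^2 + \tfrac{\sigma_a}{\sigma_b}R_k^2\big)$, and squaring the bracket and multiplying by $2$ yields exactly $\tfrac{\sigma_b^2}{2\sigma_a^2}R_y^4 + 2R_y^2R_k^2 + 2\tfrac{\sigma_a^2}{\sigma_b^2}R_k^4$, which is \eqref{eq:dpsens_appendix}.

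The step I expect to need most care is the reduction: choosing $\beta,\gamma$ so that the $\sqrt{2}$-weighted vectorisation of the cross block comes out as $y_ik_{ij}$ with no stray constant, so that the post-processing map $P$ restores precisely the prescribed block variances $\sigma_a^2$ and $\sigma_b^2$, and so that the spurious $y_i^2$ coordinate can indeed be discarded for free (it can, being post-processing). Everything after that is the short Frobenius-norm estimate above; the only slack --- hence the reason $\Delta$ is an upper bound on, rather than exactly, the sensitivity --- is discarding the $-2\langle\mathbf{w},\mathbf{w}'\rangle^2$ term and separately maximising $y^2$ and $\|\bk\|_2^2$ inside $\|\mathbf{w}\|_2^2$.
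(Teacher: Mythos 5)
Your proposal is correct and reaches exactly the stated $\Delta$, but by a route that differs in an interesting way from the paper's. The paper bounds the two blocks separately, obtaining $\|y t_a(\bk) - y' t_a(\bk')\|_2^2 \le 2R_k^2R_y^2 - 2yy'\langle\bk,\bk'\rangle$ and $\|t_b(\bk)-t_b(\bk')\|_2^2 \le 2R_k^4 - 2\langle\bk,\bk'\rangle^2$, rescales the second block by $\sigma_a/\sigma_b$, and then explicitly maximises the resulting downward parabola in $t=\langle\bk,\bk'\rangle$ at its vertex $t=-yy'/(2c)$, which is where the $\tfrac{\sigma_b^2}{2\sigma_a^2}R_y^4$ term comes from. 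You instead absorb $y$ into the feature vector, privatise the single rank-one object $\mathbf{w}\mathbf{w}^T$ with $\mathbf{w}=[\beta y;\gamma\bk]$, and invoke the Frobenius identity $\|\mathbf{w}\mathbf{w}^T-\mathbf{w}'(\mathbf{w}')^T\|_F^2=\|\mathbf{w}\|^4+\|\mathbf{w}'\|^4-2\langle\mathbf{w},\mathbf{w}'\rangle^2\le 2B^4$; expanding $2B^4$ reproduces the vertex value of the paper's parabola, so the implicit completion of the square is done for you by the rank-one structure. The price is one extra released coordinate, $\beta^2 y_i^2$, whose contribution $\beta^4(y^2-y'^2)^2$ is counted in your sensitivity and then discarded by the post-processing map $P$; this is harmless (post-processing preserves DP and your $\Delta$ still bounds the sensitivity of the augmented map), and remarkably it introduces no slack in the final formula -- the two bounds coincide term by term. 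Your version is arguably closer in spirit to Lemma 3.3 of Kulkarni et al.\ that both proofs adapt, and it makes the choice $\sqrt{2}\beta\gamma=1$ (so the cross block is released unscaled) transparent; the paper's version avoids the auxiliary coordinate and keeps the released vector exactly $[\bA;\hat{\bB}]$ throughout. Your bookkeeping of the block variances under $P$ ($PP^T=\mathrm{diag}(\bI_d,(\sigma_b/\sigma_a)^2\bI)$, hence noise $\mathrm{diag}(\sigma_a^2\bI_d,\sigma_b^2\bI)$) is correct, and the one-point substitution argument for the sensitivity of the sum matches the paper's.
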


\begin{proof}
As $\mathcal{M}$ is a sum over terms each depending on a single data entry, we can compute the sensitivity as the maximal change in a single entry of the sum when the input changes from $(\bx, y)$ to $(\bx', y')$. Denoting $\bk = \bk_{Z\bx}, \bk' = \bk_{Z\bx'}$, we can bound the first term
\begin{equation}
    \| y t_a(\bk) - y' t_a(\bk') \|_2^2 
    = \| y \bk \|^2 + \| y' \bk' \|^2 - 2\langle y\bk, y' \bk' \rangle
    \le 2R_k^2 R_y^2 - 2yy' \langle \bk, \bk' \rangle
    \label{eq:ta_sens}
\end{equation}
and the second term
\begin{equation}
    \| t_b(\bk) - t_b(\bk') \|_2^2 
    = \| \bk \|^4 + \| \bk' \|^4 - 2\langle \bk, \bk' \rangle^2
    \le 2R_k^4 - 2 \langle \bk, \bk' \rangle^2.
    \label{eq:tb_sens}
\end{equation}

Similarly as \citet{kulkarni_differentially_2021}, we can rescale the two parts to share the same variance $\sigma_a^2$ by noticing that
\begin{equation}
\begin{split}
    \mathcal{M}(\bX, \bY) &= 
    \sum_{i=1}^{|X|} \begin{bmatrix} y_i t_a(\bk_i) \\ t_b(\bk_i) \end{bmatrix}
    + \mathcal{N}\left(0, \mathrm{diag}(\sigma_a^2 \bI_{d}, \sigma_b^2 \bI_{d_2})\right) \\
    &= 
    \begin{bmatrix}
    \bI_d & 0 \\ 0 & \frac{\sigma_b}{\sigma_a}
    \end{bmatrix}
    \left( \sum_{i=1}^{|X|} \begin{bmatrix} y_i t_a(\bk_i) \\ \frac{\sigma_a}{\sigma_b} t_b(\bk_i) \end{bmatrix}
    + \mathcal{N}\left(0, \sigma_a^2 \bI_{d + d_2})\right) \right).
\end{split}
\end{equation}

We can bound the sensitivity of $\sum_{i=1}^{|X|} \begin{bmatrix} y_i t_a(\bk_i) \\ \frac{\sigma_a}{\sigma_b} t_b(\bk_i) \end{bmatrix}$ using Eqs. \eqref{eq:ta_sens} and \eqref{eq:tb_sens} as
\begin{equation}
    \Delta^2 \le -2ct^2 - 2yy't + 2cR_k^4 + 2 R_k^2 R_y^2,
\end{equation}
where we have denoted $c = \frac{\sigma_a^2}{\sigma_b^2}, t = \langle \bk, \bk' \rangle$. Viewed as a second order polynomial over $t$, RHS has its maximum at $t = -\frac{yy'}{2c}$ which yields
\begin{equation}
    \Delta^2 \le 2cR_k^4 + 2 R_k^2 R_y^2 + \frac{(yy')^2}{2c}
    \le 2cR_k^4 + 2 R_k^2 R_y^2 + \frac{R_y^4}{2c},
\end{equation}
which can be simplified to yield the claim.
\end{proof}

\section{Approximate bounds for validation set log-likelihood for hyperparameter learning}

Hyperparameter learning uses an estimate of mean validation set log-likelihood evaluated using \textsc{CoinPress} \citep{biswas_coinpress_2020}. \textsc{CoinPress} requries bounds for the function but is not extremely sensitive to the bounds because of the multi-step protocol. In this section, we derive approximate bounds for validation set log-likelihoods. Values outside these bounds will be clipped to make them adhere to the bounds.

The validation set density is a normal with mean given by the GP prediction and variance equal to GP predictive variance plus observation noise variance $\sigma^2$. In order to obtain approximate bounds, we consider values obtained with either perfect mean prediction or one that is off by $2R_y$, where $R_y \ge |y|$ is the bound used in the sensitivity calculations, both under variance equal to just the observation noise. The upper bound for the log-likelihood is tight, but the "lower bound" is not an actual bound as even smaller log-likelihoods are possible if the GP predictions are more than $2 \max |y|$ off. Such values are simply clipped.

These yield clipping bounds for the log-likelihood $L$
\begin{align}
    L_{\text{max}} &= -\frac{1}{2} \log(2 \pi) - \log \sigma \\
    L_{\text{min}} &= -\frac{1}{2} \log(2 \pi) - \log \sigma - \frac{4R_y^2}{2\sigma^2}.
\end{align}

These correspond to \textsc{CoinPress} parameters for the center and radius of the region of interest
\begin{align}
    R &= \frac{R_y^2}{\sigma^2} \\
    C &= -\frac{1}{2} \log(2\pi) - \log \sigma - \frac{R_y^2}{\sigma^2}.
\end{align}

\section{Proof of Theorem 4 (hyperparameter learning privacy)}

\begin{proof}
We show that Algorithm 2 is $(\epsilon_{\text{tot}}, \delta_{\text{tot}})$-DP with respect to the combination of the training and validation sets $((\bX_1, \bY_1), (\bX_2, \bY_2))$.

We start by defining $\delta_2$ to minimise $\delta_{\text{tot}}$. By using the definition of $T$, we find the optimal value
\begin{equation}
    \delta_2 = \frac{\sqrt{2 \delta}}{\gamma}.
\end{equation}
Using this value,
\begin{equation}
    \delta_{\text{tot}} = \frac{\sqrt{2 \delta}}{\gamma} \left[1 - \log\left(\frac{\sqrt{2 \delta}}{\gamma}\right)\right]
    = t [1 - \log t],
\end{equation}
where $t = \frac{\sqrt{2 \delta}}{\gamma}$.
We solve this equation numerically to find $t_0$ that matches the desired $\delta_{\text{tot}}$, allowing us to solve for $\delta = \gamma^2t_0^2/2$ that yields the desired $\delta_{\text{tot}}$.

Given $\delta$, we can easily solve for $\epsilon$ required to match the target $\epsilon_{\text{tot}}$.

Lemma 3 shows that the combination of \textsc{DP-GP-inference} and \textsc{CoinPress-mean} is $(\epsilon, \delta)$-DP with respect to the combination of training and validation sets.

We then use the finite stopping variant of Algorithm 2 of \citet{liu_private_2019}, which is $(\epsilon_{\text{tot}}, \delta_{\text{tot}})$-DP by Theorem 3.4 of \citet{liu_private_2019} when applied to $(\epsilon, \delta)$-DP inference-and-evaluation algorithm.
\end{proof}

\end{document}